\newtheorem{theorem}{Theorem}
\author{
  Alexandre Lacoste \\
  Element AI\\
  \texttt{allac@elementai.com} \\
  \And
  Boris Oreshkin \\
  Element AI \\
  \texttt{boris@elementai.com} 
  \And
  Wonchang Chung \\
  Element AI\\
  \texttt{wonchang@elementai.com} 
  \And
  Thomas Boquet \\
  Element AI\\
  \texttt{thomas@elementai.com} 
  \And
  Negar Rostamzadeh \\
  Element AI\\
  \texttt{negar@elementai.com} 
  \And
  David Krueger \\
  University of Montreal\\
  \texttt{david.scott.krueger@gmail.com} \\
}
\title{Uncertainty in Multitask Transfer Learning}
\date{}
\newcommand{\KL}[2]{\operatorname{KL} \left[ #1  \; \middle \Vert \;  #2 \right] }
\newcommand{\esp}[1]{\underset{#1}{\mathbb{E}}}
\newcommand{\wset}{\mathcal{W}}
\newcommand{\data}{\mathcal{D}}
\newcommand{\normal}{\mathcal{N}}
\newcommand{\mub}{\boldsymbol{\mu}}
\newcommand{\sigmab}{\boldsymbol{\sigma}}
\newcommand{\ctxt}{\boldsymbol{c}_j}
\newcommand{\aux}{\boldsymbol{z}}
\newcommand{\qj}{q_{\theta_j}\!(\aux_j|S_j, \alpha)}
\begin{document}
\maketitle

\begin{abstract}
Using variational Bayes neural networks, we develop an algorithm capable of accumulating knowledge into a prior from multiple different tasks. The result is a rich and meaningful prior capable of few-shot learning on new tasks. The posterior can go beyond the mean field approximation and yields good uncertainty on the performed experiments. Analysis on toy tasks shows that it can learn from significantly different tasks while finding similarities among them. Experiments of Mini-Imagenet yields the new state of the art with 74.5\% accuracy on 5 shot learning. Finally, we provide experiments showing that other existing methods can fail to perform well in different benchmarks.
\end{abstract}

\section{Introduction}

% DK problem statement: MTTL 
While conventional supervised learning is getting more stable and used in a wide range of applications, learning a complex model may require a daunting amount of labeled data. For this reason, transfer learning is often considered as an option to reduce the sample complexity of learning a new task \footnote{A task is defined as modeling the underlying distribution from a dataset of observations.}. While there has been a significant amount of progress in domain adaptation \citep{ganin2016domain}, this particular form of transfer learning requires a source task highly related to the target task and a large amount of data on the source task. For this reason, we seek to make progress on multitask transfer learning (also know as few-shot learning), which is still far behind human level transfer capabilities \citep{lake2017building}. In the few-shot learning setup, a potentially large number of tasks are available to learn parameters shared across all tasks. Once the shared parameters are learned, the objective is to obtain good generalization performance on a new task with a small number of samples.

% DK Actually this whole PP is a bit too harsh on Bayesian DL, I think...
% AL I agree. We should change a few words to make it a little less harsh. But I still want to get the message across BDL is far from a posterior that goes beyond slightly better generalization.
% DK BDL priors suck and this is holding back BDL
% DK TODO: link this to previous PP -- what is the story?  Is our method about MTTL or BDL?
Recently, significant progress has been made to scale Bayesian neural networks to large tasks and to provide better approximations of the posterior distribution \citep{blundell2015weight,louizos2017multiplicative,krueger2017bayesian}. This, however, comes with an important question: ``What does the posterior distribution actually represent?''. 
% Fundamentally, the Bayesian approach can be seen as \emph{prior} $\times$ \emph{observations} $\propto$ \emph{posterior}. 
For neural networks, the prior is often chosen for convenience and the approximate posterior is often very limited \citep{blundell2015weight}. For sufficiently large datasets, the observations overcome the prior, and the posterior becomes a single mode around the true model\footnote{The true model must have positive probability under the prior. Also, when the true model can be parameterized differently, modeling one or multiple modes is equivalent.}, justifying most uni-modal posterior approximations. 

However, many usages of the posterior distribution require a meaningful prior. That is, a prior expressing our current knowledge on the task and, most importantly, our lack of knowledge on the task. In addition to that, a good approximation of the posterior under the small sample size regime is required, including the ability to model multiple modes. This is indeed the case for Bayesian optimization \citep{snoek2012practical}, Bayesian active learning \citep{gal2017deep}, continual learning \citep{kirkpatrick2017overcoming}, safe reinforcement learning \citep{berkenkamp2017safe}, exploration-exploitation trade-off in reinforcement learning \citep{houthooft2016vime}. Gaussian processes \citep{rasmussen2004gaussian} have historically been used for these applications, but using an RBF kernel is a too generic prior for many tasks. More recent tools such as deep Gaussian processes \citep{damianou2013deep} show great potential and yet their scalability whilst learning from multiple tasks needs to be improved. 
% ^ DK if we're concerned with small data regime, this makes this criticism of GPs weak.

Our aim in this work is to learn a good prior across multiple tasks and transfer it to a new task. To be able to express a rich and flexible prior learned across a large number of tasks, we use neural networks learned with a variational Bayes procedure. By doing so, we are able to (i) isolate a small number of task specific parameters and (ii) obtain a rich posterior distribution over this space. Additionally, the knowledge accumulated from the previous tasks provides a meaningful prior on the target task, yielding a meaningful posterior distribution which can be used in a small data regime. 
% Our contributions are as follows:
% \begin{itemize}
%     \item We propose a simple and scalable approach to variational Hierarchical Bayes for multitask transfer learning.
%     \item We show that we can obtain meaningful model uncertainty for new tasks \textit{i.e.} meaningful prior and flexible posterior distribution.
%     % \item The approach is simple and flexible. 
%     \item We show that the proposed approach is competitive with meta-learning approaches on Mini-Imagenet \citep{vinyals2016matching,ravi2016optimization}. 
% \end{itemize}

The rest of the paper is organized as follows: We first describe the proposed approach in Section~\ref{sec:deep-prior} while reviewing hierarchical Bayes modeling. Section~\ref{sec:related-work} focuses on outlining key differences between our approach and related methods. In Section~\ref{sec:3L-HB}, we extend to 3 level of hierarchies to obtain a model more suited for classification. In Section~\ref{sec:experiments}, we conduct experiments on toy tasks to gain insight on the behavior of the algorithm. Finally, we show that we can obtain the new state of the art on the Mini-Imagenet benchmark \cite{vinyals2016matching}.

\section{Learning a Deep Prior}\label{sec:deep-prior}
% Bayesian Neural Networks \citep{blundell2015weight, kingma2015variational, pmlr-v48-gal16, krueger2017bayesian} are now scalable and can be used to estimate prediction uncertainty and model uncertainty \citep{gal2017deep}. 
% While many efforts focus on better approximation of the posterior, we believe that the quality of the uncertainty highly depends on the choice of the prior. 
% Hence, we consider learning a prior from previous tasks by learning a probability distribution $p(w|\alpha)$ over the weights $w$ of a network, parameterized by $\alpha$, and leveraging this learned prior to reduce sample complexity on new tasks.

By leveraging the variational Bayes approach, we show how we can learn a prior over models with neural networks. Also, by factorizing the posterior distribution into a task agnostic and task specific component, we show an important simplification resulting in a scalable algorithm, which we refer to as \emph{deep prior}.

\subsection{Hierarchical Bayes}
We consider learning a prior from previous tasks by learning a probability distribution $p(w|\alpha)$ over the weights $w$ of a network parameterized by $\alpha$. This is done using a hierarchical Bayes approach across $N$ tasks, with hyper-prior $p(\alpha)$. Each task has its own parameters $w_j$, with $\wset = \{w_j\}_{j=1}^N$. Using all datasets $\data = \{S_j\}_{j=1}^N$, we have the following posterior:\footnote{$p(x_{ij})$ cancelled with itself from the denominator since it does not depend on $w_j$ nor $\alpha$. This would have been different for a generative approach.}
\begin{align*}
p(\wset, \alpha| \data) &= p(\alpha|\data) \prod_j p(w_j| \alpha, S_j) \\
& \propto p(\data|\wset) p(\wset|\alpha) p(\alpha) \\
& = \prod_j \prod_i p(y_{ij} |x_{ij}, w_j) p(w_j|\alpha) p(\alpha),
\end{align*}
The term $p(y_{ij} |x_{ij}, w_j)$ corresponds to the likelihood of sample $i$ of task $j$ given a model parameterized by $w_j$ \textit{e.g.} the probability of class $y_{ij}$ from the softmax of a neural network parameterized by $w_j$ with input $x_{ij}$. For the posterior $p(\alpha|\data)$, we assume that the large amount of data available across multiple tasks will be enough to overcome generic prior $p(\alpha)$ such as an isotropic Normal distribution. Hence, we consider a point estimate of the posterior $p(\alpha|\data)$ using maximum a \emph{posteriori}\footnote{This can be done through simply minimizing the cross entropy of a neural network with $L_2$ regularization.}.

We can now focus on the remaining term: $p(w_j|\alpha)$. Since $w_j$ is potentially  high dimensional with intricate correlations among the different dimensions, we cannot use a simple Gaussian distribution. Following inspiration from generative models such as GANs \citep{goodfellow2014generative} and VAE \citep{kingma2013auto}, 
% For simplicity, in this work, we consider a point estimation of $p(\alpha|\data)$. This can be justify by considering scenarios where we have a lot of samples to learn $\alpha$ across many tasks while the uncertainty we truly care about is the uncertainty over $w_j$ for new tasks.
we use an auxiliary variable $\aux \sim \mathcal{N}(0,I_{d_z})$ and a deterministic function projecting the noise $\aux$ to the space of $w$ \textit{i.e.} $w = h_{\alpha}(\aux)$. Marginalizing $\aux$, we have: $p(w|\alpha) = \int_{\aux} p(\aux) p(w|\aux, \alpha) d \aux = \int_{\aux} p(\aux) \delta_{h_\alpha(\aux) - w} d \aux$, where $\delta$ is the Dirac delta function. Unfortunately, directly marginalizing $\aux$ is intractable for general $h_\alpha$. To overcome this issue, we add $\aux$ to the joint inference and marginalize it at inference time. Considering the point estimation of $\alpha$, the full posterior is factorized as follows:
\begin{align}
& \textstyle{\prod_{j=1}^N} p(w_j,  \aux_j| \alpha, S_j) \label{eqn:unnormalized-posterior} \\
& = \textstyle\prod_{j=1}^N p(w_j| \aux_j, \alpha, S_j)p(\aux_j| \alpha, S_j)  \nonumber \\
& \propto \textstyle\prod_{j=1}^N  p(w_j| \aux_j, \alpha) p(\aux_j) \prod_{i=1}^{n_j} p(y_{ij} |x_{ij}, w_j), \nonumber
\end{align}
where $p(y_{ij} |x_{ij}, w_j)$ is the conventional likelihood function of a neural network with weight matrices generated from the function $h_\alpha$ i.e.: $w_j = h_\alpha(\aux_j)$. Similar architecture has been used in \citet{krueger2017bayesian} and \citet{louizos2017multiplicative}, but we will soon show that it can be reduced to a simpler architecture in the context of multi-task learning. The other terms are defined as follows:
\begin{align}
p(\aux_j) &= \mathcal{N}(0,I) \label{eqn:task-prior} \\
p(\aux_j, w_j | \alpha) &=   p(\aux_j) \delta_{h_\alpha(\aux_j) - w_j} \label{eqn:joint-prior} \\
p(\aux_j, w_j | \alpha, S_j) &= p(\aux_j| \alpha, S_j) \delta_{h_\alpha(\aux_j)- w_j} \label{eqn:task-posterior}
\end{align}

The task will consist of jointly learning a function $h_\alpha$ common to all tasks and a posterior distribution $p(\aux_j| \alpha, S_j)$ for each task. At inference time, predictions are performed by marginalizing $z$ \textit{i.e.}: $p(y|x, \data) = \esp{\aux_j \sim p(\aux_j|\alpha, S_j)} p(y|x,h_\alpha(\aux_j))$.

\subsection{Hierarchical Variational Bayes Neural Network} \label{sec:HVB}
In the previous section, we describe the different components for expressing the posterior distribution of Equation~\ref{eqn:task-posterior}. While all those components are tractable, the normalization factor hidden behind the "$\propto$" sign is still intractable. To address this issue, we follow the Variational Bayes approach \citep{blundell2015weight}. 

Conditioning on $\alpha$, we saw in Equation~\ref{eqn:unnormalized-posterior} that the posterior factorizes independently for all tasks. This reduces the joint Evidence Lower BOund (ELBO) to a sum of individual ELBO for each task. % For a more complete work, we provide an ELBO including the inference of $\alpha$ in appendix \TODO{do it}. 

Given a family of distributions $\qj$, parameterized by $\{\theta_j\}_{j=1}^N$ and $\alpha$, the Evidence Lower Bound for task $j$ is:
\begin{small}
\begin{align}
    \ln p(S_j) \nonumber 
    & \geq \esp{q(\aux_j, w_j| S_j, \alpha)} \sum_{i=1}^{n_j} \ln p(y_{ij}|x_{ij}, w_j)  - \operatorname{KL}_j  \nonumber \\
    & = \esp{\qj}    \sum_{i=1}^{n_j} \ln p(y_{ij}|x_{ij}, h_\alpha(\aux_j))  - \operatorname{KL}_j \label{eqn:elbo} \\
    & = \operatorname{ELBO_j}, \nonumber
\end{align}
\end{small}
where,
\begin{align}
    \operatorname{KL}_j &= \KL{q(\aux_j, w_j | S_j, \alpha)}{p(\aux_j, w_j| \alpha)} \nonumber \\
    & = \esp{\qj} \esp{q(w_j|\aux_j, \alpha)} \ln \frac{\qj}{p(\aux_j|\alpha)} \frac{ \delta_{h_\alpha(\aux_j) - w_j}}{ \delta_{h_\alpha(\aux_j) - w_j}}\nonumber\\
    & = \esp{\qj} \ln \frac{\qj}{p(\aux_j|\alpha)}  \label{eqn:kl} \\
    & = \KL{\qj}{p(\aux_j|\alpha)} \nonumber
\end{align}
% \begin{tiny}
% \begin{align*}
%     &\ln p(\data) \\
%     & \geq \esp{q(\{w_j, \aux_j\}_{j=1}^N)}   \sum_{j=1}^N \sum_{i=1}^{n_j} \ln p(y_{ij}|x_{ij}, w_j)  - \KL{q}{p},\\
%     &=  \sum_{j=1}^N\esp{q_{\theta_j}(\aux_j|S_j)}    \sum_{i=1}^{n_j} \ln p(y_{ij}|x_{ij}, h_\alpha(\aux_j))  -  \sum_j \esp{q(w_j, \aux_j)} \ln \frac{q_{\theta_j}(\aux_j|S_j)}{p(\aux_j)} \frac{ \delta_{h_\alpha(\aux_j) - w_j}}{ \delta_{h_\alpha(\aux_j) - w_j}} \\
%     &=  \sum_{j=1}^N\esp{q_{\theta_j}(\aux_j|S_j)}   \sum_{i=1}^{n_j} \ln p(y_{ij}|x_{ij}, h_\alpha(\aux_j)) -\sum_j  \KL{q_{\theta_j}(\aux_j|S_j)}{p(\aux_j)}
% \end{align*}
% \end{tiny}
Notice that after simplification\footnote{We can justify the cancellation of the Dirac delta functions by instead considering a Gaussian with finite variance, $\epsilon$.  For all $\epsilon > 0$, the cancellation is valid, so letting $\epsilon \rightarrow 0$, we recover the result.}, $\operatorname{KL}_j$ is no longer over the space of $w_j$ but only over the space $\aux_j$. Namely, the posterior distribution is factored into two components, one that is task specific and one that is task agnostic and can be shared with the prior. This amounts to finding a low dimensional manifold in the parameter space where the different tasks can be distinguished. Then, the posterior $p(\aux_j|S_j, \alpha)$ only has to model which of the possible tasks are likely, given observations $S_j$ instead of modeling the high dimensional $p(w_j|S_j, \alpha)$. 

But, most importantly, any explicit reference to $w$ has now vanished from both Equation~\ref{eqn:elbo} and Equation~\ref{eqn:kl}. This simplification has an important positive impact on the scalability of the proposed approach. Since we no longer need to explicitly calculate the KL on the space of $w$, we can simplify the likelihood function to $p(y_{ij} | x_{ij}, \aux_j, \alpha)$, which can be a deep network parameterized by $\alpha$, taking both $x_{ij}$ and $\aux_j$ as inputs. This contrasts with the previous formulation, where $h_\alpha(\aux_j)$ produces all the weights of a network, yielding an extremely high dimensional representation and slow training.

\subsection{Posterior Distribution}
For modeling $\qj$, we can use $\normal(\mub_j, \sigmab_j)$, where $\mub_j$ and $\sigmab_j$ can be learned individually for each task. This, however limits the posterior family to express a single mode. For more flexibility, we also explore the usage of more expressive posterior, such as Inverse Autoregressive Flow (IAF) \citep{kingma2016improving}.
% IAF transforms the source of noise $\normal(\mub_j, \sigmab_j)$ through a sequence of \emph{Normalizing Flow} layers. Each of these layers are invertible and the log determinant of the Jacobian can be computed efficiently. To achieve this, they use a network with autoregressive properties, such as MADE \citep{germain2015made}, to generate a conditional linear transformation for each individual noise variable. The resulting transformation has a triangular Jocobian and the determinant is simply the product of the diagonal elements.
This gives a flexible tool for learning a rich variety of multivariate distributions. In principle, we can use a different IAF for each task, but for memory and computational reasons, we use a single IAF for all tasks and we condition\footnote{We follow the architecture proposed in \citet{kingma2016improving}.} on an additional task specific context $\ctxt$. 
% With $n$ normalizing flow layers, we have the following
% \begin{align*}
%     \aux_{0,j} & \sim \normal(\mub_j, \sigmab_j) \\
%     \ln p(\aux_{0,j}) &= \ln \normal(\aux_0 | \mub_j, \sigmab_j) \\
%     \aux_{i,j} &= \operatorname{IAF}_i(\aux_{i-1, j}| \ctxt ) \\
%     \ln p(\aux_{i,j}) &= \ln p(\aux_{i-1,j})  - \ln \det \left \vert \frac{d \aux_{i,j}}{d \aux_{i-1,j}} \right \vert \\
%     \aux_j &= \aux_{n,j} \\
%     \qj &= p(\aux_{n,j})
% \end{align*}

Note that with IAF, we cannot evaluate $\qj$ for any values of $\aux$ efficiently, only for those which we just sampled, but this is sufficient for estimating the KL term with a Monte-Carlo approximation \textit{i.e.}:
\newcommand{\nmc}{n_{\operatorname{mc}}}
\begin{align*}
        \operatorname{KL}_j \approx \frac{1}{\nmc} \sum_{i=1}^{\nmc} \ln q_{\theta_j}(\aux_j^{(i)} | S_j, \alpha) - \ln \normal \left( \aux_j^{(i)} \middle\vert \boldsymbol{0}, \boldsymbol{1} \right),
\end{align*}
where $\aux_j^{(i)} \sim \qj$. It is common to approximate $\operatorname{KL}_j$ with a single sample and let the mini-batch average the noise incurred on the gradient. We experimented with $\nmc=10$, but this did not significantly improve the rate of convergence.

\subsection{Training Procedure}

In order to compute the loss proposed in Equation~\ref{eqn:elbo}, we would need to evaluate every sample of every task. To accelerate the training, we describe a procedure following the mini-batch principle. First we replace summations with expectations:
\begin{align}
    \operatorname{ELBO} &= 
     \sum_{j=1}^N \left( \esp{\aux_j \sim q_j}  \sum_{i=1}^{n_j} \ln p(y_{ij}|x_{ij}, z_j)  - \operatorname{KL}_j \right) \nonumber \\
    & = \esp{j \sim U_N} N \left(  n_j \esp{\aux_j \sim q_j} \; \esp{i \sim U_{n_j}} \!  \ln p(y_{ij}|x_{ij}, z_j)  \!- \!\operatorname{KL}_j \right) \label{eqn:mb-loss}
\end{align}

\newcommand{\nmb}{n_{\operatorname{mb}}}
Now it suffices to approximate the gradient with $\nmb$ samples across all tasks. Thus, we simply concatenated all datasets into a \emph{meta-dataset} and added $j$ as an extra field. Then, we sample uniformly\footnote{We also explored a sampling scheme that always make sure to have at least $k$ samples from the same task. The aim was to reduce gradient variance on task specific parameters but, we did not observed any benefits.}  $\nmb$ times with replacement from the meta-dataset. Notice the term $n_j$ appearing in front of the likelihood in Equation~\ref{eqn:mb-loss}, this indicates that individually for each task it finds the appropriate trade-off between the prior and the observations. Refer to Algorithm~\ref{algo:training} for more details on the procedure.

\begin{algorithm}
\begin{algorithmic}[1]
\STATE for i in 1 .. $\nmb$:
\begin{ALC@g}
    \STATE sample $x$, $y$ and $j$ uniformly from the meta dataset
    \STATE $\aux_j, \ln q(\aux_j) = \operatorname{IAF}_{\alpha}(\mub_j, \sigmab_j, \ctxt)$
    \STATE $\operatorname{KL}_j \approx \ln q(\aux_j) - \ln \normal( \aux_j | 0, I_{d_z}) $
    \STATE $ \mathcal{L}_i = n_j \ln p(y|x, \aux_j, \alpha)  + KL_j$
\end{ALC@g}
\caption{\label{algo:training} Calculating the loss for a mini-batch}
\end{algorithmic}
\end{algorithm}

\section{Extending to 3 Level of Hierarchies}\label{sec:3L-HB}

Deep prior, gives rise to a very flexible way to transfer knowledge from multiple tasks. 
% In principle it can handle very different tasks without degrading the performance by considering different modes in the network. 
However, there is still an important assumption at the heart of deep prior (and other VAE based approach such as \citet{edwards2016towards}), the task information must be encoded in a low dimensional variable $\aux$. In Section~\ref{sec:experiments}, we show that it is appropriate for regression, but for image classification, it is not the most natural assumption. Hence, we propose to extend to a third level of hierarchy by introducing a latent classifier on the obtained representation.

In Equation~\ref{eqn:elbo}, for a given\footnote{We removed $j$ from equations to alleviate the notation.} task $j$, we decomposed the likelihood $p(S| z)$ into $\prod_{i=1}^{n} p(y_i |x_i, z)$ by assuming that the neural network is directly predicting $p(y_i |x_i, z)$. Here, we introduce a latent variable $v$ to make the prediction $p(y_i | x_i, v)$. This can be, for example, a Gaussian linear regression on the representation $\phi_\alpha(x, \aux)$ produced by the neural network. The general form now factorizes as follow: $p(S | z ) = \esp{v \sim p(v| z)} \prod_i p(y_i | v, x_i) p(x_i)$, which is commonly called the \emph{marginal likelihood}. 

To compute ELBO$_j$ in \ref{eqn:elbo} and update the parameters $\alpha$, the only requirement is to be able to compute the marginal likelihood $p(S|z)$. There are closed form solutions for, e.g., linear regression with Gaussian prior, but our aim is to compare with algorithms such as Prototypical Networks (Proto Net) \citep{shell2017prototypical} on a classification benchmark. Alternatively, we can factor the marginal likelihood as follow $p(S|z) = \prod_{i=1}^n p(y_i| x_i, S_{0..i-1}, z)$. If a well calibrated task uncertainty is not required, one can also use a leave one out procedure $\prod_{i=1}^n p(y_i| x_i, S \setminus \{x_i, y_i\}, z)$. Both of these factorizations corresponds to training $n$ times the latent classifier on a subset of the training set and evaluating on a left out sample. We refer the reader to \citet[Chapter~5]{rasmussen2004gaussian} for a discussion on the difference between leave one out cross validation and marginal likelihood.

For a practical algorithm, we propose a closed form solution for leave one out in prototypical networks. In it's standard form, the prototypical network produces a prototype $c_k$ by averaging all representations $\gamma_i = \phi_\alpha(x_i, \aux)$ of class $k$ \textit{i.e.} $c_k = \frac{1}{|K|} \sum_{i \in K} \gamma_i$, where $K = \{i : y_i=k\}$. Then, predictions are made using $p(y=k|x, \alpha, \aux) \propto \exp \left( -\left\Vert c_k- \gamma_i \right\Vert_2 \right) $.

\begin{theorem}
Let $c_k^{-i}\; \forall k$ be the prototypes computed without example $x_i, y_i$ in the training set. Then, 
\begin{align}
    \Vert c_k^{-i} - \gamma_i \Vert_2 =  
    \begin{cases}
    \frac{|K|}{|K|-1}  \Vert c_k - \gamma_i \Vert_2, & \text{if } y_i = k\\
    \Vert c_k - \gamma_i \Vert_2,              & \text{otherwise}
    \end{cases}
\end{align}
\end{theorem}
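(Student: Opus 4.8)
The plan is to split the proof into the two cases of the statement and handle each by direct algebraic substitution, since the identity is purely a statement about how the empirical mean $c_k$ changes when one summand is removed.

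First I would dispose of the \emph{otherwise} case. If $y_i \neq k$, then $i \notin K = \{l : y_l = k\}$, so deleting the example $(x_i, y_i)$ from the training set does not touch any term in the sum defining $c_k$. Hence $c_k^{-i} = c_k$ and the equality $\Vert c_k^{-i} - \gamma_i \Vert_2 = \Vert c_k - \gamma_i \Vert_2$ is immediate.

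The substantive case is $y_i = k$, i.e.\ $i \in K$. Here I would start from the two defining relations $|K|\, c_k = \sum_{l \in K} \gamma_l$ and $(|K|-1)\, c_k^{-i} = \sum_{l \in K,\, l \neq i} \gamma_l$, and subtract to eliminate the sum over the remaining indices, giving $(|K|-1)\, c_k^{-i} = |K|\, c_k - \gamma_i$. Solving for the leave-one-out prototype yields
\begin{align*}
c_k^{-i} = \frac{|K|\, c_k - \gamma_i}{|K|-1}.
\end{align*}
The key step is then to subtract $\gamma_i$ and observe that the $\gamma_i$ contributions combine cleanly:
\begin{align*}
c_k^{-i} - \gamma_i = \frac{|K|\, c_k - \gamma_i - (|K|-1)\gamma_i}{|K|-1} = \frac{|K|\,(c_k - \gamma_i)}{|K|-1}.
\end{align*}
Taking the Euclidean norm and pulling out the nonnegative scalar $\tfrac{|K|}{|K|-1}$ gives the claimed $\Vert c_k^{-i} - \gamma_i \Vert_2 = \tfrac{|K|}{|K|-1}\Vert c_k - \gamma_i \Vert_2$.

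There is no genuine obstacle here: the result is an exact rank-one update of a sample mean, and the only thing to watch is the bookkeeping that the removed example appears once in the full prototype and not at all in the leave-one-out prototype, which is exactly what makes the two $\gamma_i$ terms collapse into the factor $\tfrac{|K|}{|K|-1}$. The mild degenerate point worth a remark is $|K| = 1$, where $c_k^{-i}$ is undefined (an empty average); I would note that the formula is intended for classes with at least two examples, so that leave-one-out is well defined.
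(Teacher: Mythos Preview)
Your proof is correct and follows essentially the same route as the paper: a direct algebraic computation showing $c_k^{-i}-\gamma_i=\tfrac{|K|}{|K|-1}(c_k-\gamma_i)$ when $y_i=k$, with the other case trivial. Your derivation via the identity $(|K|-1)\,c_k^{-i}=|K|\,c_k-\gamma_i$ is in fact a slightly tidier presentation of the same manipulation the paper performs by adding and subtracting $\gamma_i$ inside the sum.
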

We defer to supplementary materials. Hence, we only need to compute prototypes one time and rescale the Euclidean distance when comparing with a sample that was used for computing the current prototype. This gives an efficient algorithm with the same complexity as the original one and a good proxy for the marginal likelihood.

\section{Related Work}\label{sec:related-work}

Hierarchical Bayes algorithms for multitask learning has a long history \citep{daume2009bayesian, wan2012sparse, bakker2003task}. However most of the literature focus on simple statistical models and do not consider transferring on new tasks. 

More recently, \citet{edwards2016towards} and \citet{bouchacourt2017multi} explore hierarchical Bayesian inference with neural networks and evaluate on new tasks. Both of them use a two level Hierarchical VAE for modeling the observations. While similar, our approach differs in a few different ways. We use a discriminative approach and focus on model uncertainty. We show that we can obtain a posterior on $\aux$ without having to explicitly encode $S_j$. We also explore the usage of more complex posterior family such as IAF. Those differences make our algorithm simpler to implement, and  easier to scale to larger datasets. 

Some recent works on meta-learning are also targeting transfer learning from multiple tasks. Model-Agnostic Meta-Learning (MAML) \citep{finn2017model} finds a shared parameter $\theta$ such that for a given task, one gradient step on $\theta$ using the training set will yield a model with good predictions on the test set. Then, a meta-gradient update is performed from the test error through the one gradient step in the training set, to update $\theta$. This yields a simple and scalable procedure which learns to generalize. Recently \citet{grant2018recasting} considers a Bayesian version of MAML. Additionally, \citep{ravi2016optimization} also consider a meta-learning approach where an encoding network reads the training set and generates the parameters of a model, which is trained to perform well on the test set.

Finally, some recent interest in few-shot learning give rise to various algorithms capable of transferring from multiple tasks. Many of these approaches \citep{vinyals2016matching, shell2017prototypical} find a representation where a simple algorithm can produce a classifier from a small training set. \citet{bauer2017discriminative} use a neural network pre-trained on a standard multi-class dataset to obtain a good representation and use classes statistics to transfer prior knowledge to new classes.

\section{Experimental Results}\label{sec:experiments}
Through experiments, we want to answer i) Can deep prior learn a meaningful prior on tasks? ii) Can it compete against state of the art on a strong benchmark? iii) In which situations deep prior and other approaches are failing?
% we address the following questions:
% \begin{itemize}
%     \item Can the proposed architecture learn a good task manifold suitable for learning rapidly on a new task?
%     \item Does it capture model uncertainty properly?
%     \item Can it compete with state of the art on few-shot learning benchmarks?
% \end{itemize}
% To address the 2 first questions, we develop a collection of one dimensional toy tasks. While they are artificial, they are useful at gaining insight on what the model can and cannot do. Finally, we run experiments on Mini-Imagenet to compare with the state of the art.

\subsection{Regression on one dimensional Harmonic signals}\label{sec:harmonics}

To gain a good insight into the behavior of the prior and posterior, we choose a collection of one dimensional regression tasks. We also want to test the ability of the method to \emph{learn} the task and not just match the observed points. For this, we will use periodic functions and test the ability of the regressor to extrapolate outside of its domain. 

Specifically, each dataset consists of $(x,y)$ pairs (noisily) sampled from a sum of two sine waves with different phase and amplitude and a frequency ratio of 2: $f(x) = a_1 \sin(\omega \cdot x + b_1) + a_2 \sin(2\cdot \omega \cdot x + b_2) \label{eqn:harmonics}$, where $y \sim \mathcal{N}(f(x), \sigma_y^2)$. We construct a meta-training set of 5000 tasks, sampling $\omega \sim \mathcal{U}(5, 7)$, $(b_1, b_2) \sim \mathcal{U}(0, 2\pi)^2$ and $(a_1, a_2) \sim \mathcal{N}(0,1)^2$ independently for each task. To evaluate the ability to extrapolate outside of the task's domain, we make sure that each task has a different domain. Specifically, $x$ values are sampled according to $\mathcal{N}(\mu_{x}, 1)$, where $\mu_{x}$ is sample from the \emph{meta-domain} $\mathcal{U}(-4,4)$. The number of training samples ranges from 4 to 50 for each task and, evaluation is performed on 100 samples from tasks never seen during training.  

\paragraph{Model} Once $\aux$ is sampled from IAF, we simply concatenate it with $x$ and use 12 densely connected layers of 128 neurons with residual connections between every other layer. The final layer linearly projects to 2 outputs $\mu_y$ and $s$, where $s$ is used to produce a heteroskedastic noise, $\sigma_y = \operatorname{sigmoid}(s)\cdot 0.1 + 0.001$. Finally, we use $p(y|x,\aux) = \mathcal{N}(\mu_y(x,\aux), \sigma_y(x,\aux)^2)$ to express the likelihood of the training set. To help gradient flow, we use ReLU activation functions and Layer Normalization\footnote{Layer norm only marginally helped.}~\citep{ba2016layer}. 

\paragraph{Results} Figure~\ref{fig:harmonics-posterior} depicts examples of tasks with 1, 2, 8, and 64 samples. The true underlying function is in blue while 10 samples from the posterior distributions are faded in the background. The thickness of the line represent 2 standard deviations. The first plot has only one single data point and mostly represents samples from the prior, passing near this observed point. Interestingly, all samples are close to some parametrization of Equation~\ref{eqn:harmonics}. Next with only 2 points, the posterior is starting to predict curves highly correlated with the true function. However, note that the uncertainty is over optimistic and that the posterior failed to fully represent all possible harmonics fitting those two points. We discuss this issue more in depth in supplementary materials. Next, with 8 points, it managed to mostly capture the task, with reasonable uncertainty. Finally, with 64 points the model is certain of the task.

To add a strong baseline, we experimented with MAML \citep{finn2017model}. After exploring a variety of values for hyper-parameter and architecture design we couldn't make it work for our two harmonics meta-task. We thus reduced the meta-task to a single harmonic and reduced the base frequency range by a factor of two. With those simplifications, we managed to make it converge, but the results are far behind that of deep prior even in this simplified setup. Figure~\ref{fig:maml} shows some form of adaptation with 16 samples per task but the result is jittery and the extrapolation capacity is very limited. Those results were obtained with a densely connected network of 8 hidden layers of 64 units\footnote{We also experimented with various other architectures.}, with residual connections every other layer. The training is performed with two gradient steps and the evaluation with 5 steps. To make sure our implementation is valid, we first replicated their regression result with a fixed frequency as reported in \citep{finn2017model}.

\begin{figure}
\centering
\begin{subfigure}{.65\textwidth}
  \centering
    \includegraphics[width=.95\textwidth]{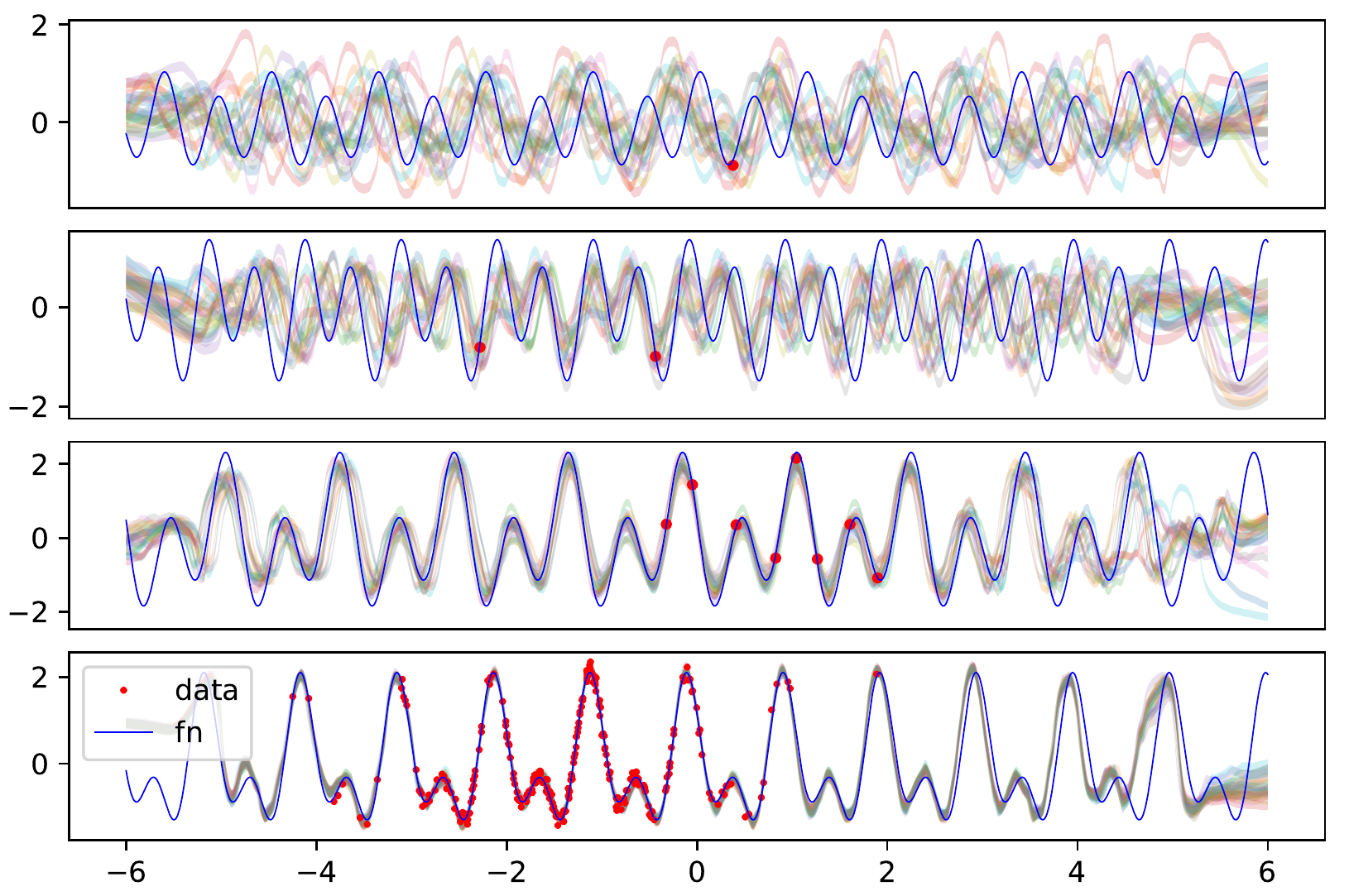}
  \caption{Deep Prior}
  \label{fig:harmonics-posterior}
\end{subfigure}%
\begin{subfigure}{.35\textwidth}
  \centering
    \includegraphics[width=.95\textwidth]{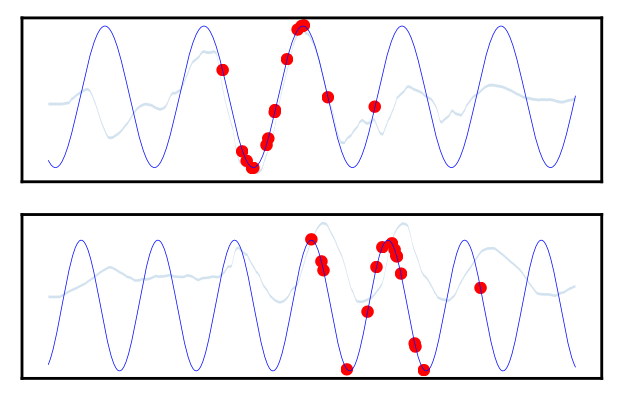}
  \caption{MAML}
  \label{fig:maml}
\end{subfigure}
 \caption{Preview of a few tasks (blue line) with increasing amount of training samples (red dots). Samples from the posterior distribution are shown in semi-transparent colors. The width of each samples is two standard deviations (provided by the predicted heteroskedastic noise).}
\end{figure}

% \begin{figure}[htb]
%   \centering
%     \includegraphics[width=0.45\textwidth]{deep_prior_posterior_samples.pdf}
%     \caption{Preview of a few tasks (blue line) with increasing amount of training samples (red dots). Samples from the posterior distribution are shown in semi-transparent colors. The width of each samples is two standard deviations (provided by the predicted heteroskedastic noise).}
%     \label{fig:harmonics-posterior}
% \end{figure}

% \begin{figure}
%     \centering
%     \includegraphics[width=0.45\textwidth]{maml.png}
%     \caption{Results with MAML on a single harmonic with reduced frequency.}
%     \label{fig:maml}
% \end{figure}

\begin{figure}[htb]
\centering
\begin{subfigure}{.6\textwidth}
    \centering
    \includegraphics[width=0.8\textwidth]{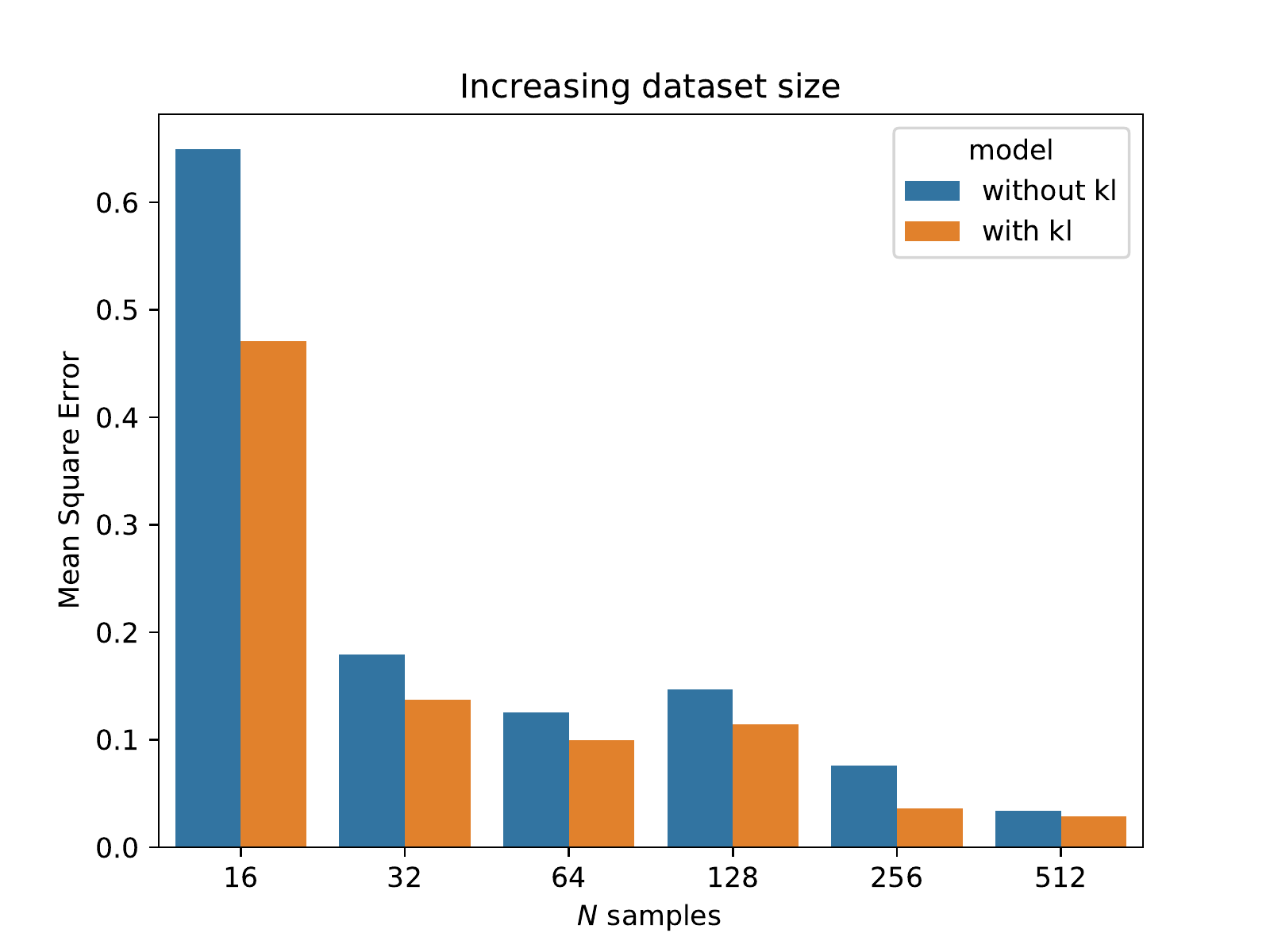}
\end{subfigure}%
\begin{subfigure}{.4\textwidth}
  \centering
  \includegraphics[width=.9\linewidth]{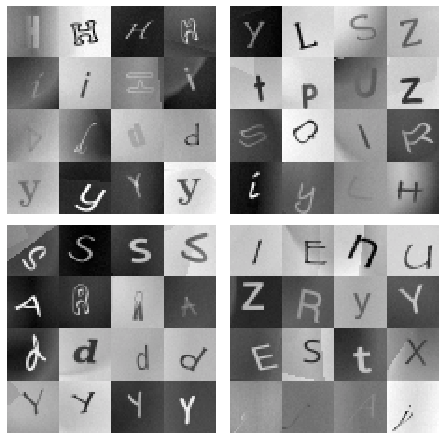}
\end{subfigure}
\caption{ \textbf{left:} Mean Square Error on increasing dataset size. The baseline corresponds to the same model without the KL regularizer. Each value is averaged over 100 tasks and 10 different restart. \textbf{right:} 4 sample tasks from the Synbols dataset. Each row is a class and each column is a sample from the classes. In the 2 left tasks, the symbol have to be predicted while in the two right tasks, the font has to be predicted.}
\label{fig:harmonics-mse+synbols}
\end{figure}

Finally, to provide a stronger baseline, we remove the KL regularizer of deep prior and reduced the posterior $\qj$ to a deterministic distribution centered on $\mub_j$. The mean square error is reported in Figure~\ref{fig:harmonics-mse+synbols} for an increasing dataset size. This highlights how the uncertainty provided by deep prior yields a systematic improvement. 

\subsection{Mini-Imagenet Experiment}\label{sec:imagenet}

\citet{vinyals2016matching} proposed to use a subset of Imagenet to generate a benchmark for few-shot learning. Each task is generated by sampling 5 classes uniformly and 5 training samples per class, the remaining images from the 5 classes are used as query images to compute accuracy. The number of unique classes sums to 100, each having 600 examples of $84 \times 84$ images. To perform meta-validation and meta-test on unseen tasks (and classes), we isolate 16 and 20 classes respectively from the original set of 100, leaving 64 classes for the training tasks. This follows the procedure suggested in~\citet{ravi2016optimization}. 

The training procedure proposed in Section~\ref{sec:deep-prior} requires training on a fixed set of tasks. We found that 1000 tasks yields enough diversity and that over 9000 tasks, the embeddings are not being visited often enough over the course of the training. To increase diversity during training, the $5\times5$ training and test sets are re-sampled every time from a fixed train-test split of the given task\footnote{If the train and test split is not fixed for a given task, one could leak the test information through the task embeddings across different resampling of the task.}. 

We first experimented with the vanilla version of deep prior (\ref{sec:deep-prior}). In this formulation, we use a ResNet \citep{he2016deep} network, where we inserted FILM layers \citep{perez2017film, devries2017modulating} between each residual block to condition on the task. Then, after flattening the output of the final convolution layer and reducing to 64 hidden units, we apply a 64 $\times$ 5 matrix generated from a transformation of $z$. Finally, predictions are made through a softmax layer. We found this architecture to be slow to train as the generated last layer is noisy for a long time and prevent the rest of the network to learn. Nevertheless, we obtained 62.6\% accuracy on Mini-Imagenet, on par with many strong baselines. 

To enhance the model, we combine task conditioning with prototypical networks as proposed in Section~\ref{sec:3L-HB}. This approach alleviates the need to generate the final layer of the network, thus accelerating training and increasing generalization performances. While we no longer have a well calibrated task uncertainty, the KL term still acts as an effective regularizer and prevents overfitting on small datasets\footnote{We had to cross validate the weight of the kl term and obtained our best results using values around 0.1}. With this improvement, we are now the new state of the art with $74.5\%$ (Table~\ref{table:results_mini_imagenet}). In Table~\ref{table:ablation_study}, we perform an ablation study to highlight the contributions of the different components of the model. In sum, a deeper network with residual connections yields major improvements. Also, task conditioning does not yield improvement if the leave one out procedure is not used. Finally, the KL regularizer is the final touch to obtain state of the art.

\subsection{Heterogeneous Collection of Tasks}
In Section~\ref{sec:imagenet}, we saw that conditioning helps, but only yields a minor improvement. This is due to the fact that Mini-Imagenet is a very homogeneous collection of tasks where a single representation is sufficient to obtain good results. To support this claim, we provide a new benchmark\footnote{Code and dataset will be provided.} of synthetic symbols which we refer to as Synbols. Images are generated using various font family on different alphabets (Latin, Greek, Cyrillic, Chinese) and background noise (Figure~\ref{fig:harmonics-mse+synbols}, right). For each task we have to predict either a subset of 4 font families or 4 symbols with only 4 examples. Predicting either fonts or symbols with two separate Prototypical Networks, yields 84.2\% and 92.3\% accuracy respectively, with an average of 88.3\%. However, blending the two collections of tasks in a single benchmark, brings prototypical network down to 76.8\%. Now, conditioning on the task with deep prior brings back the accuracy to 83.5\%. While there is still room for improvement, this supports the claim that a single representation will only work on homogeneous collection of tasks and that task conditioning helps learning a family of representations suitable for heterogeneous benchmarks.

\begin{table}[t]
\begin{minipage}[b]{0.45\linewidth}
\begin{small}
    \centering
    \begin{tabular}{l c} 
        & Accuracy \\
        \hline
        Matching Networks  \citep{vinyals2016matching} & 60.0 \%  \\
        Meta-Learner LSTM  \citep{ravi2016optimization} & 60.6 \%\\
        MAML \citep{finn2017model} & 63.2\%  \\
        Prototypical Networks  \citep{shell2017prototypical} & 68.2 \%  \\
        SNAIL \citep{mishra2018simle}  & 68.9 \%  \\ 
        Discriminative k-shot \citep{bauer2017discriminative}  & 73.9 \% \\ 
        adaResNet \citep{munkhdalai2018rapid}  & 71.9 \% \\
        \hline 
        Deep Prior  (Ours) & 62.7 \%   \\  
        \textbf{Deep Prior + Proto Net (Ours)} & \textbf{74.5 \% }\\
        \hline
    \end{tabular}
\caption{Average classification accuracy on 5-shot Mini-Imagenet benchmark.}
\label{table:results_mini_imagenet}
\end{small}
\end{minipage}\hfill
\begin{minipage}[b]{0.50\linewidth}
\begin{small}
    \centering
    \begin{tabular}{l c c c c c} 
        \hline
        &5-way, 5-shot  & 4-way, 4-shot\\ 
         & Mini-Imagenet & Synbols \\
        \hline
        Proto Net (ours) & 68.6 $\pm$ 0.5\% & 69.6 $\pm$ 0.8\%      \\
        + ResNet(12)     & 72.4 $\pm$ 1.0\% & 76.8 $\pm$ 0.4\% \\
        + Conditioning   & 72.3 $\pm$ 0.6\% & 80.1 $\pm$ 0.9\% \\
        + Leave One Out  & 73.9 $\pm$ 0.4\% & 82.7 $\pm$ 0.2\% \\
        + KL             & \textbf{74.5 $\pm$ 0.5\%} & \textbf{83.5 $\pm$ 0.4\%} \\
        \hline
    \end{tabular}
\caption{Ablation Study of our model. Accuracy is shown with 90\% confidence interval over bootstrap of the validation set.}
\label{table:ablation_study}
\end{small}
\end{minipage}
\end{table}

\section{Conclusion}
 
Using variational Bayes, we developed a scalable algorithm for hierarchical Bayes learning of neural networks, called deep prior. This algorithm is capable of transferring information from tasks that are potentially remarkably different. Results on the Harmonics dataset shows that the learned manifold across tasks exhibits the properties of a meaningful prior. Finally, we found that MAML, while very general, will have a hard time adapting when tasks are too different. Also, we found that algorithms based on a single image representation only works well when all tasks can succeed with a very similar set of features. Together those findings allowed us to develop the new state of the art on Mini-Imagenet. 

\bibliography{main}
\bibliographystyle{abbrvnat}

\section{Appendix}\label{sec:appendix}
\addtocounter{theorem}{-1}

\subsection{Proof of Leave One Out}
\begin{theorem}
Let $c_k^{-i}\; \forall k$ be the prototypes computed without example $x_i, y_i$ in the training set. Then, 
\begin{align}
    \Vert c_k^{-i} - \phi_\alpha(x_i) \Vert_2 =  
    \begin{cases}
    \frac{|K|}{|K|-1}  \Vert c_k - \phi_\alpha(x_i) \Vert_2, & \text{if } y_i = k\\
    \Vert c_k - \phi_\alpha(x_i) \Vert_2,              & \text{otherwise}
    \end{cases}
\end{align}
\end{theorem}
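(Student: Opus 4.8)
The plan is to split the argument into two cases according to whether the held-out label $y_i$ equals the class $k$ whose prototype is under consideration, and in each case to write $c_k^{-i}$ directly in terms of the full prototype $c_k$ and the held-out representation $\gamma_i = \phi_\alpha(x_i, \aux)$.

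First I would dispose of the trivial case $y_i \neq k$. Here the index $i$ is not a member of $K = \{j : y_j = k\}$, so deleting the pair $(x_i, y_i)$ leaves the set $K$, and therefore the average $c_k = \frac{1}{|K|}\sum_{j \in K}\gamma_j$, completely unchanged. Thus $c_k^{-i} = c_k$ and the claimed identity $\Vert c_k^{-i} - \gamma_i \Vert_2 = \Vert c_k - \gamma_i \Vert_2$ holds immediately.

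The substantive case is $y_i = k$, where $i \in K$. Here I would start from the defining average, rewrite it as $\sum_{j \in K}\gamma_j = |K|\,c_k$, and remove the contribution of the deleted point to get $c_k^{-i} = \frac{1}{|K|-1}\bigl(|K|\,c_k - \gamma_i\bigr)$. Substituting this into $c_k^{-i} - \gamma_i$ and collecting the $\gamma_i$ terms, the difference should simplify to the exact scalar multiple $\frac{|K|}{|K|-1}\bigl(c_k - \gamma_i\bigr)$. Taking the Euclidean norm and pulling the positive scalar $\frac{|K|}{|K|-1}$ outside then produces the stated rescaling factor.

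The computation is elementary, so there is no real analytic obstacle; the only point requiring care is the final algebraic step, where one must confirm that $c_k^{-i} - \gamma_i$ collapses to a genuine scalar multiple of $c_k - \gamma_i$ rather than merely an arbitrary linear combination of $c_k$ and $\gamma_i$. The conceptual fact worth emphasizing is that the leave-one-out perturbation preserves the \emph{direction} of the vector from prototype to point and alters only its length, which is precisely what allows the full-prototype distances to be computed once and then rescaled.
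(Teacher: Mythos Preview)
Your proposal is correct and follows essentially the same approach as the paper: both handle the $y_i \neq k$ case trivially and, for $y_i = k$, rewrite $c_k^{-i}$ in terms of $c_k$ and $\gamma_i$ to show that $c_k^{-i}-\gamma_i = \tfrac{|K|}{|K|-1}(c_k-\gamma_i)$, then take norms. Your version is in fact slightly more direct, since you immediately use $\sum_{j\in K}\gamma_j = |K|\,c_k$ rather than adding and subtracting $\gamma_i$ inside the sum as the paper does.
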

\begin{proof}
Let $\gamma_i = \phi_\alpha(x_i)$, $n=|K|$ and assume $y_i = k$ then,
\begin{align}
     \gamma_i - c_k^{-i}   &= \gamma_i - \tfrac{1}{n-1}  \sum_{j \in K \wedge j \neq i} \gamma_j \\
     &=\gamma_i - \tfrac{1}{n-1} \left( \sum_{j \in K \wedge j \neq i}\gamma_j +\gamma_i -\gamma_i \right) \tfrac{n-1}{n} \tfrac{n}{n-1} \\
     &=\gamma_i \left( 1 + \tfrac{1}{n-1} \right) - \tfrac{n}{n-1} \left( \tfrac{1}{n} \sum_{j \in K }\gamma_j \right)\\
     &=\tfrac{n}{n-1} \left( \gamma_i - c_k\right).
\end{align}
When $y_i \neq k$, the result is trivially $\gamma_i - c_k^{-i} = \gamma_i - c_k$. 
\end{proof}

\subsection{Limitations of IAF} \label{sec:sin}
\begin{figure}[htb]
    \centering
    \includegraphics[width=0.45\textwidth]{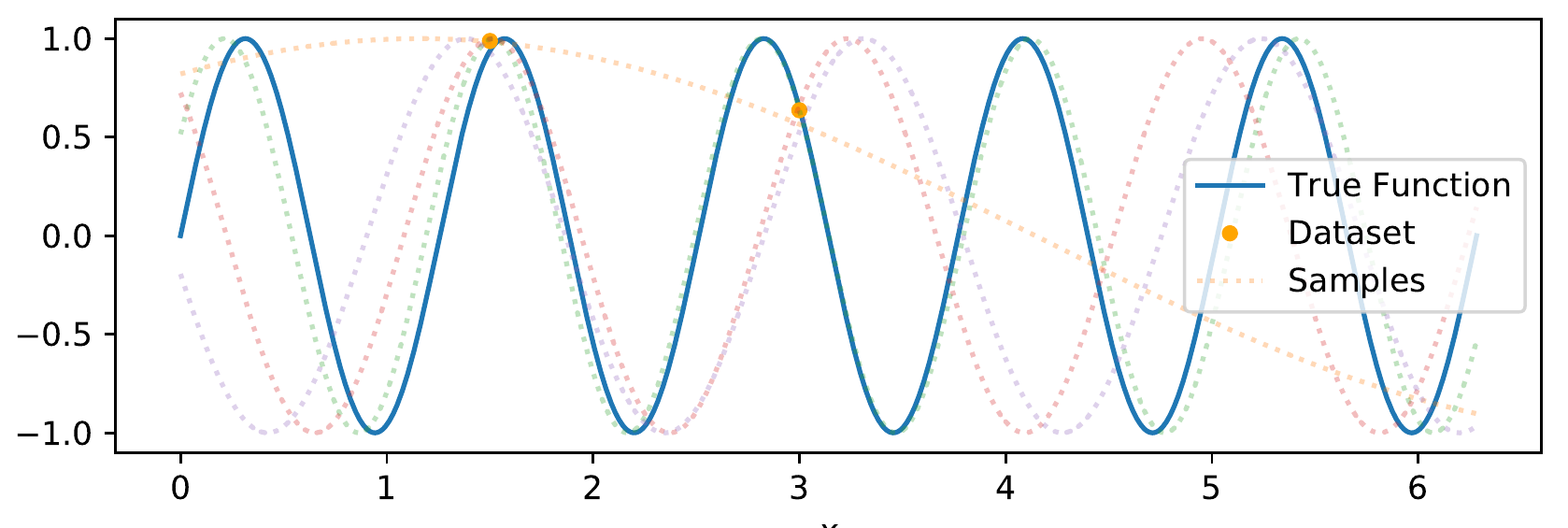}
    \includegraphics[width=0.45\textwidth]{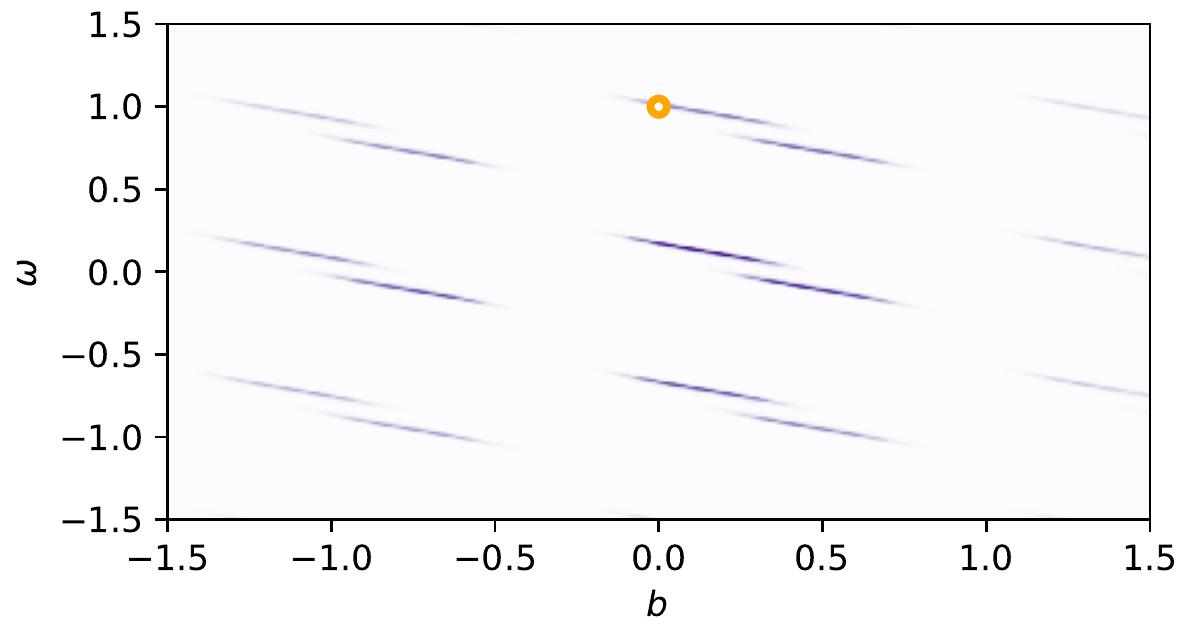}
    \includegraphics[width=0.45\textwidth]{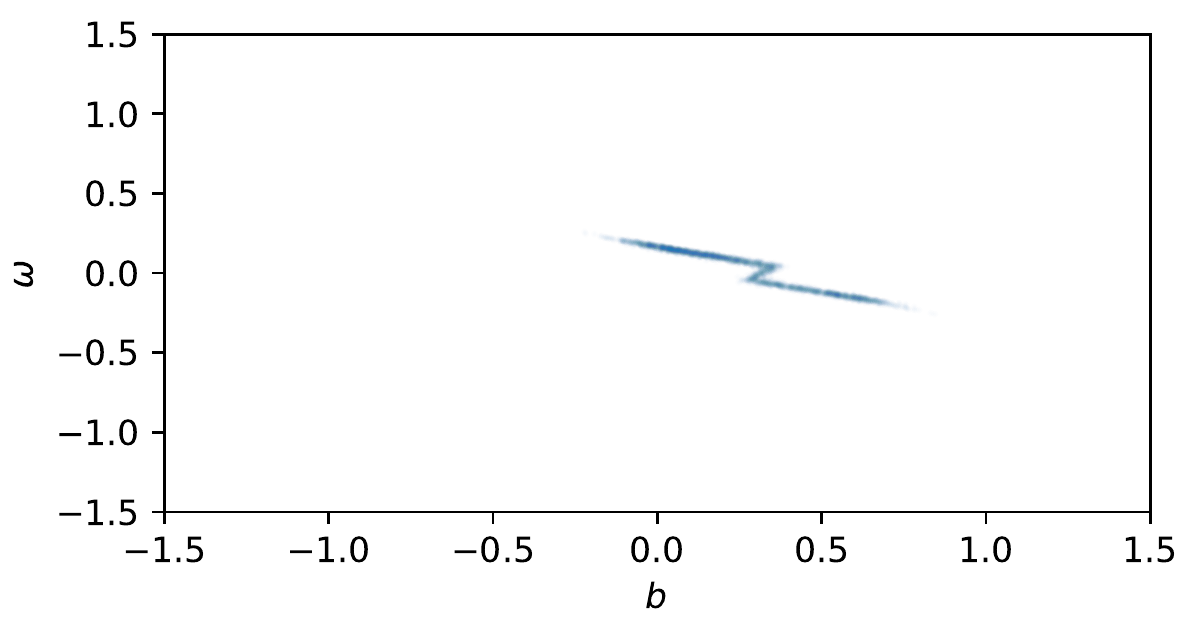}
    \caption{\textbf{top:} True function in the original space with 2 observed data points. \textbf{middle:} True posterior distribution, where the orange dot corresponds to the location of the true underlying function. \textbf{bottom:} Samples from IAF's learned posterior.}
    \label{fig:sinus}
\end{figure}

When experimenting with the Harmonics toy dataset in Section~\ref{sec:harmonics}, we observed issues with repeatability, most likely due to local minima. We decided to investigate further on the multimodality of posterior distributions with small sample size and the capacity of IAF to model them. For this purpose we simplified the problem to a single sine function and removed the burden of learning the prior. The likelihood of the observations is defined as follows:
\begin{equation*}
f(x) = \sin(5(\omega \cdot x + b)) ; \;\; y \sim \mathcal{N}(f(x), \sigma_y^2),
\end{equation*}
where $\sigma_y = 0.1$ is given and $p(\omega) = p(b) = \mathcal{N}(0,1)$. Only the frequency $\omega$ and the bias $b$ are unknown\footnote{We scale $\omega$ and $b$ by a factor of 5 so that the range of interesting values fits well in the interval $(-1, 1)$. This Makes it more approachable by IAF.}, yielding a bi-dimensional problem that is easy to visualize and quick to train. We use a dataset of 2 points at $x=1.5$ and $x=3$ and the corresponding posterior distribution is depicted in Figure~\ref{fig:sinus}-middle, with an orange point at the location of the true underlying function. Some samples from the posterior distribution can be observed in Figure~\ref{fig:sinus}-top.

We observe a high amount of multi-modality on the posterior distribution (Figure~\ref{fig:sinus}-middle). Some of the modes are just the mirror of another mode and correspond to the same functions e.g. $b + 2\pi$ or $-f \;;\; b + \pi$. But most of the time they correspond to different functions and modeling them is crucial for some application. The number of modes varies a lot with the choice of observed dataset, ranging from a few to several dozens. Now, the question is: "How many of those modes can IAF model?". Unfortunately, Figure~\ref{fig:sinus}-bottom reveals poor capability for this particular case. After carefully adjusting the hyperparameters\footnote{12 layers with 64 hidden units MADE network for each layer, learned with Adam at a learning rate of $0.0002$.} of IAF, exploring different initialization schemes and running multiple restarts, we rarely capture more than two modes (sometimes 4). Moreover, it will not be able to fully separate the two modes. There is systematically a thin path of density connecting each modes as a chain. With longer training, the path becomes thinner but never vanishes and the magnitude stays significant.

\end{document}